\documentclass[sigconf]{acmart}

\AtBeginDocument{%
  }
    
\usepackage{balance}
\usepackage{multirow}
\usepackage{soul}
\usepackage{graphicx}
\usepackage{amsmath}
\usepackage{amsthm}
\usepackage{algorithm}
\usepackage{algorithmic}
\usepackage{bm} 
\copyrightyear{2024}
\acmYear{2024}
\setcopyright{acmlicensed}
\acmConference[MM '24]{Proceedings of the 32nd ACM International Conference on Multimedia}{October 28-November 1, 2024}{Melbourne, VIC, Australia}
\acmBooktitle{Proceedings of the 32nd ACM International Conference on Multimedia (MM '24), October 28-November 1, 2024, Melbourne, VIC, Australia}
\acmDOI{10.1145/3664647.3680620}
\acmISBN{979-8-4007-0686-8/24/10}

\begin{document}

\title{Towards High-performance Spiking Transformers from ANN to SNN Conversion}

\author{Zihan Huang}
\affiliation{
  \institution{Peking University}
  \city{Beijing}
  \country{China}}
\email{hzh@stu.pku.edu.cn}

\author{Xinyu Shi}
\affiliation{%
  \institution{Peking University}
  \city{Beijing}
  \country{China}}
\email{xyshi@pku.edu.cn}

\author{Zecheng Hao}
\affiliation{
  \institution{Peking University}
  \city{Beijing}
  \country{China}}
\email{1900012989@pku.edu.cn}

\author{Tong Bu}
\affiliation{%
  \institution{Peking University}
  \city{Beijing}
  \country{China}}
\email{putong30@pku.edu.cn}

\author{Jianhao Ding}
\affiliation{%
  \institution{Peking University}
  \city{Beijing}
  \country{China}}
\email{djh01998@stu.pku.edu.cn}

\author{Zhaofei Yu}
\authornote{Corresponding author.}
\affiliation{%
  \institution{Peking University}
  \city{Beijing}
  \country{China}}
\email{yuzf12@pku.edu.cn}

\author{Tiejun Huang}
\affiliation{%
  \institution{Peking University}
  \city{Beijing}
  \country{China}}
\email{tjhuang@pku.edu.cn}

\renewcommand{\shortauthors}{Zihan Huang et al.}

\begin{abstract}
Spiking neural networks (SNNs) show great potential due to their energy efficiency, fast processing capabilities, and robustness. There are two main approaches to constructing SNNs. Direct training methods require much memory, while conversion methods offer a simpler and more efficient option. However, current conversion methods mainly focus on converting convolutional neural networks (CNNs) to SNNs. Converting Transformers to SNN is challenging because of the presence of non-linear modules. In this paper, we propose an Expectation Compensation Module to preserve the accuracy of the conversion. The core idea is to use information from the previous T time-steps to calculate the expected output at time-step T. We also propose a Multi-Threshold Neuron and the corresponding Parallel Parameter normalization to address the challenge of large time steps needed for high accuracy, aiming to reduce network latency and power consumption. Our experimental results demonstrate that our approach achieves state-of-the-art performance. For example, we achieve a top-1 accuracy of 88.60\% with only a 1\% loss in accuracy using 4 time steps while consuming only 35\% of the original power of the Transformer. To our knowledge, this is the first successful Artificial Neural Network (ANN) to SNN conversion for Spiking Transformers that achieves high accuracy, low latency, and low power consumption on complex datasets. The source codes of the proposed method are available at https://github.com/h-z-h-cell/Transformer-to-SNN-ECMT.
\end{abstract}
\begin{CCSXML}
<ccs2012>
   <concept>
       <concept_id>10010147.10010178</concept_id>
       <concept_desc>Computing methodologies~Artificial intelligence</concept_desc>
       <concept_significance>500</concept_significance>
       </concept>
 </ccs2012>
\end{CCSXML}
\ccsdesc[500]{Computing methodologies~Artificial intelligence}
\keywords{Spiking Neural Networks, Spiking Transformer, ANN-SNN Conversion, Expectation Compensation, Multi-Threshold Neurons}



\maketitle

\section{Introduction}
Spiking neural networks(SNNs) are a type of neural network model that imitates the mechanisms of biological neurons\cite{bohte2000spikeprop,gerstner2014neuronal}. They are called the third generation of neural networks~\cite{maass1997networks} due to their biological plausibility and computational efficiency\cite{tavanaei2019deep,zenke2021visualizing}.
Neurons in SNNs do not produce output at each time step. Instead, they become active and emit spikes only when their membrane potential reaches a specific threshold. The sparse activity of spikes leads to significantly higher computational efficiency than traditional neural networks \cite{roy2019towards}, especially on neuromorphic chips \cite{merolla2014million, davies2018loihi, debole2019truenorth, pei2019towards}. However, training large-scale, high-precision, and low-latency SNNs remains challenging due to the non-differentiable nature of spikes.

Currently, there are two main approaches to train SNNs. 
The first approach is direct training using backpropagation or local learning\cite{lee2016training,wu2018spatio,neftci2019surrogate,zhang2020temporal,fang2021deep,wu2022brain,zhu2022efficient,yin2024understanding,xu2024adaptive,duan2022temporal}. These methods utilize differentiable continuous functions or spike-time-dependent plasticity strategies to replace the non-differentiable spike emission rules. However, this training process still relies on standard GPUs that are not well-suited for the unique characteristics of SNNs, leading to significant resource consumption and limited performance.
The second approach is ANN to SNN conversion \cite{cao2015spiking,rueckauer2017conversion,li2021free,deng2021optimal,bu2022optimal}. This conversion method does not require any additional training. Instead, it uses pre-trained ANNs and replaces activation functions with spiking neurons, leveraging the similarity between ReLU activation functions and spike emission rates of integrate-and-fire models. The result SNN model preserves the original ANN's performance but often leads to longer inference times, and the modules that can be successfully converted are limited.

As is well known, Transformers have demonstrated exceptional performance in various vision tasks \cite{dosovitskiy2021an, radford2021learning, jain2023oneformer, liu2023efficientvit, chen2024multiscale}. Despite numerous efforts to convert CNNs to SNNs, converting Transformer models remains a challenge. This is due to unique nonlinear modules such as layernorm and GELU in Transformers that differ from the ReLU function in CNNs. These modules require interaction between neurons within the same layer and exhibit non-linear characteristics, making it challenging to achieve accurate conversion through the linear piecewise quantization of individual neurons.

This paper proposes a new method to convert Transformers to SNNs. The main challenge lies in handling non-linear modules. To address this, we propose an Expectation Compensation Module (ECM) that calculates expectations and replaces each non-linear module. Specifically, a customized ECM is employed to substitute the matrix product, performing most operations through accumulations. This reduces power consumption and ensures the total output matches the expected result at each time step. To improve the efficiency of minimal spikes, we introduce Multi-Threshold Neurons and the corresponding Parallel Parameter normalization, significantly reducing the required latency and power consumption for inference with comparable accuracy.

Our main contributions are summarized as follows:
\begin{itemize}
    \item We analyze the challenges of non-linear module conversion in Transformer and introduce a novel solution called the Expectation Compensation Module, which uses the information from the previous time steps to calculate the expected output at the current time step. This module overcomes the limitations of traditional methods with minimal power consumption increase.
    \item To overcome the issue of slow accuracy improvement over time during Transformer conversion, we propose a Multi-Threshold Neuron and the corresponding Parallel Parameter normalization, substantially reducing power consumption requirements and significantly decreasing latency. 
    \item The proposed method is effective on the ImageNet1k dataset, outperforming existing SNN models in accuracy and significantly reducing power consumption compared to other Transformer models. It achieves a top-1 accuracy of 88.60\%, with only a 1\% accuracy loss compared to ANN, while reducing energy consumption by 65\%.
\end{itemize}

\section{Related Works}
    \subsection*{ANN-SNN Conversion}
    The ANN-SNN conversion methods aim to replicate the performance of ANNs by converting pre-trained ANN weights into synaptic weights of SNNs. Cao et al.~\cite{cao2015spiking} initially proposed training an ANN with ReLU activation function and then replacing the activation layer with IF neurons. 
    Diehl et al.~\cite{diehl2015fast} further narrowed the gap by scaling and normalizing the weights. To address the spike count errors resulting from the hard reset mechanism, soft reset neurons were proposed in Rueckauer et al.~\cite{rueckauer2017conversion} and Han et al.~\cite{han2020rmp}.

    Further research has aimed to minimize conversion errors through various optimization:
    (1) Optimizing thresholds: Sengupta et al.~\cite{sengupta2019going} and Zhang et al.~\cite{zhang2023low} proposed dynamic threshold adjustment strategies during the conversion process. 
    (2) Optimizing membrane potential: Bu et al.~\cite{bu2022optimized} demonstrated that setting the initial membrane potential at half the threshold can reduce errors. Hao et al.~\cite{hao2023reducing} further suggested analyzing residual membrane potential to eliminate conversion errors.
    (3) Optimizing the pre-conversion ANN structure: Esser et al.~\cite{esser2019learned} suggested training ANNs with quantized activation values. Ho and Chang~\cite{ho2021tcl} introduced a trainable clipping layer (TCL) for threshold determination. Ding et al.~\cite{ding2021optimal} proposed a rate norm layer, while others ~\cite{bu2022optimal,han2023symmetric,jiang2023unified,wang2023new} suggested various activation functions to replace ReLU.
    (4) Optimizing spiking neuronal models. Li et al.~\cite{li2022efficient} introduced a neuron model for burst spikes. Wang et al.~\cite{wang2022signed} proposed a memory-enhanced signed neuron. Li et al.~\cite{li2022quantization} suggested incorporating negative spikes and extending simulation time to improve accuracy with minimal cost.
    
    Previous methods for converting CNNs to SNNs were limited by CNN performance.
    Jiang et al.\cite{jiang2024spatiotemporal} introduced Universal Group Operators and a Temporal-Corrective Self-Attention Layer to approximate original Transformers but faced long inference latency and accuracy gaps with the ANN. 
    
    In contrast, this paper presents a new method for converting Transformers to SNNs, achieving high accuracy and low latency while reducing network energy consumption.

    \subsection*{Directly Trained Transformer in ANNs and SNNs}
    The Transformer architecture has performed well in the ANN and SNN domains. Initially, Transformers gained prominence in the ANNs with their self-attention mechanisms as proposed by Vaswani et al.\cite{vaswani2017attention}. Dosovitskiy et al.~\cite{dosovitskiy2021an} then introduced the Vision Transformer (ViT), which divided images into fixed-size patches as token inputs, achieving significant success in computer vision. Fang et al.~\cite{fang2023eva,fang2023eva2} and Sun et al.~\cite{sun2023eva} further expanded ViT models to one billion parameters, pushing the limits of large-scale visual models.
    
    In the SNN domain, spike-based Transformers quickly emerged, incorporating spike self-attention mechanisms with some floating-point calculations~\cite{zhou2023spikformer,li2022spikeformer}. Subsequently, Zhou et al.~\cite{zhou2023spikingformer}, Yao et al.~\cite{yao2024spike} and Shi et al.\cite{shi2024spikingresformer} introduced fully event-based Transformers. Wang et al.\cite{wang2023masked} first trained a modified Transformer and then converted it into a Spiking Transformer.
    Spike-based Transformers have successfully applied to applications, such as monocular depth estimation~\cite{zhang2022spike}, single-object tracking with event cameras~\cite{zhang2022spiking}, and automatic speech recognition~\cite{wang2023complex}. 

    In contrast to previous methods that training Transformers from scratch, this paper focuses on converting pre-trained Transformers into SNNs to reduce energy while maintaining performance.

\section{Preliminaries}
    In this section, we first detail the theoretical basis of the conversion process from ANNs to SNNs. Then, we introduce the Vision Transformer (ViT), the ANN architecture we selected for conversion.
    \subsection{ANN-SNN conversion theory}\label{conversion_theory}
    \subsubsection{Neurons in ANNs.}
    In ANNs, for linear or convolution layers in CNNs using the ReLU activation, the output $\bm{a}^l$ of neurons in layer $l$ can be formulated as:
    \begin{align}\label{ann_relu}
        \bm{a}^l={\rm ReLU}(\bm{W}^l \bm{a}^{l-1})=\max(\bm{W}^l \bm{a}^{l-1},0),
    \end{align}
    where $\bm{W}^l$ represents the weights of the linear transformation or convolution in this layer.
    \subsubsection{Integrate-and-Fire Neurons in SNNs.}
    For Integrate-and-Fire (IF) neurons in SNNs, let $\bm{m}^l(t)$ and $\bm{v}^l(t)$ denote the membrane potential of neurons in the $l$-th layer before and after firing spikes at time-step $t$,  the neural dynamic can be formulated as follows:
    \begin{align}\label{snn1}
        \bm{m}^l(t)&=\bm{v}^l(t-1)+\bm{W}^l \bm{x}^{l-1}(t),\\
        \label{snn2}
        \bm{s}^l(t)&=H(\bm{m}^l(t)-\theta^l),\\
        \bm{x}^l(t)&=\theta^l\bm{s}^l(t),\\
        \label{snn3}
        \bm{v}^l(t)&=\bm{m}^l(t)-\bm{x}^l(t).
    \end{align}
    where $H$ is the Heaviside step function and $\theta^l$ is the neuron threshold in layer $l$. $\bm{s}^l(t)$ is the output spike of layer $l$. $\bm{x}^l(t)$ is the postsynaptic potential and theoretical output of layer $l$, which equals $\theta^l$ if the neuron fires and $0$ otherwise. Following \cite{rueckauer2017conversion} and \cite{han2020rmp}, we use the "reset-by-subtraction" mechanism, where $\bm{v}^l(t)$ decreases by a value of $\theta^l$ if the neuron fires.
    \subsubsection{ANN-SNN Conversion.}
    Combining Equations \eqref{snn1}-\eqref{snn3}, we get
    \begin{equation}\label{snn4}
    \bm{v}^l(t)-\bm{v}^l(t-1)=\bm{W}^l \bm{x}^{l-1}(t)- \bm{x}^l(t).
    \end{equation}
    Summing from time-step 1 to time-step $T$, we have
    \begin{equation}\label{snn5}
        \frac{\bm{v}^l(T)-\bm{v}^l(0)}{T}=\frac{\bm{W}^l\sum_{i=1}^T \bm{x}^{l-1}(i)}{T}-\frac{\sum_{i=1}^T \bm{x}^l(i)}{T}.
    \end{equation}
    Letting $\Phi^l(T)=\frac{\sum_{i=1}^T \bm{x}^{l}(i)}{T}$, we have
    \begin{equation}\label{snn6}
        \Phi^l(T)=\bm{W}^l\Phi^{l-1}(T)-\frac{\bm{v}^l(T)-\bm{v}^l(0)}{T}.
    \end{equation}
    Comparing Equations \eqref{ann_relu} and \eqref{snn6}, $\frac{\bm{v}^l(T)-\bm{v}^l(0)}{T}$ tends to 0 as $T$ becomes large. This allows $\Phi^l(T)$ in SNNs to approximate $\bm{a}^l$ in ANNs.
    \subsubsection{Parameter normalization.}
    Due to the spike-based communication in SNNs, approximation errors arise since SNN neurons can emit only one spike per time step, limited to a firing rate in the range of $[0,r_{\text{max}}]$, where ANNs do not have such constraints. 
    To prevent approximation errors from excessively low or high firing rates, Diehl et al.\cite{diehl2015fast} and Rueckauer et al.\cite{rueckauer2017conversion} introduced weight normalization to rescale parameters using the following equations:
    \begin{align}\label{normalize}
        W^l_\text{SNN}=W^l_\text{ANN}\frac{\lambda^{l-1}}{\lambda^l}.
    \end{align}
    where $\lambda^l$ is determined by the $p$-th percentile of the total activity distribution of layer $l$. 
    Modifying Equation \eqref{normalize} and setting $\theta^l_j$ to 1 is equivalent to adjusting the firing threshold on the soft-reset neuron to $\lambda^l$\cite{bu2022optimized}. This adjustment ensures that the output $\bm{x}^l(t)$ is a spike matrix equal to $\bm{s}^l(t)$ and suits the operational dynamics of SNNs.
    \subsection{Vision Transformer}
    Vision Transformer (ViT) architecture consists of three core components: Embeddings, Transformer Encoder, and Classification Head.
    \subsubsection{Embeddings.}
    The process starts by segmenting an image into patches of specific dimensions, viewing them as a sequence of tokens. Each patch undergoes linear embedding with added positional embeddings, enriching the output token vectors with the patch's content and location within the image.
    \subsubsection{Transformer Encoder.}
    Central to feature extraction, the Transformer Encoder plays a crucial role in various visual tasks. It is divided into two primary segments:
    
    (1) Self-Attention Mechanism. 
    This mechanism calculates a weighted sum of all the values $V$ in a given sequence. The attention weights are determined based on the similarity between a query $Q$ and a key $K$. 
    The values $Q$, $K$, and $V$ are obtained through the input $X$ using weight matrices $W^Q$, $W^K$, and $W^V$ respectively. The following equation describes the matrix form of the output calculation for the self-attention mechanism:
    \begin{align}
        O=\text{Softmax}\left(\frac{Q^T K}{\sqrt{d}}V\right)=\text{Softmax}\left(\frac{{(W^QX)}^T {W^KX}}{\sqrt{d}}{W^VX}\right).
    \end{align} 
    where $d$ is the dimension of the key and query vectors.
    
    (2) Feed-Forward Network. Here, the input vector passes through two linear layers and is activated by the GELU function between them.
    \subsubsection{Classification Head.}
    Features related to the CLS token are directed toward the classification head, which then computes the probabilities for the various classes.

\section{Method}
    In this section, we first analyze the main errors encountered in ANN-SNN conversion. Following this, we propose the Expectation Compensation Module (EC) to preserve the accuracy of non-linear modules. In particular, we detailed a lossless conversion method for the matrix product layer, mainly using additional operations. 
    Additionally, a Multi-Threshold Neuron (MT) is designed to improve the efficiency of minimal spikes, which significantly reduces network latency and energy consumption.
    The diagram shown in Figure~\ref{arch} provides an overview of the architecture we utilized.
    
    \begin{figure}[htb]
        \centering
        \includegraphics[width=\linewidth]{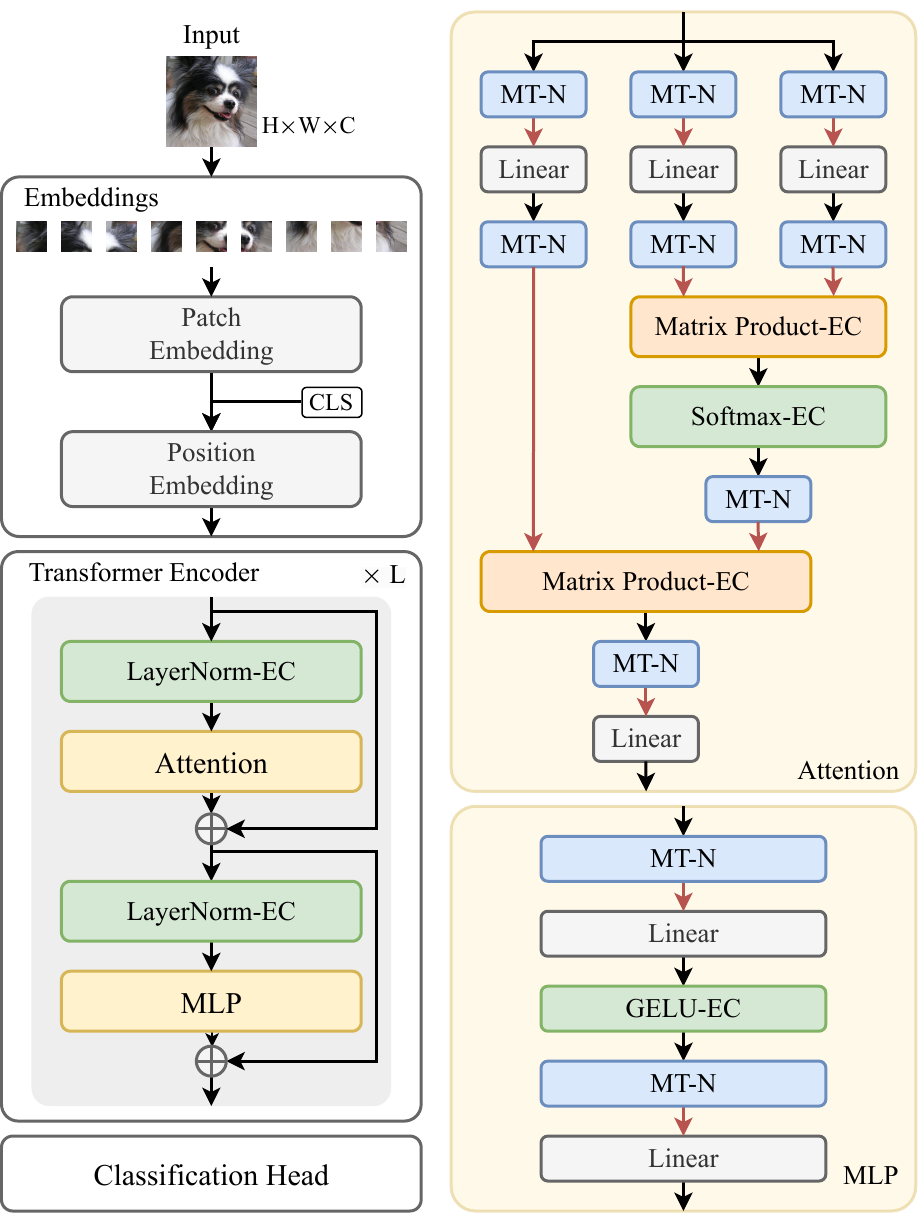}
        \caption{An overview of the proposed architecture, including the whole architecture, Attention, and MLP module.}
        \label{arch}
    \end{figure}
    
    \subsection{Error Analysis of Nonlinear Module in ANN-SNN Conversion}
    
    Existing ANN-SNN conversion methods mainly focus on CNNs, which typically employ linear operations, such as linear transformations and convolutions, combined with ReLU activation, as formulated in Equation \eqref{ann_relu}. However, Transformer architecture uses many non-linear operations, such as GELU, softmax, layernorm, and matrix product, which cannot be directly formulated using Equation \eqref{ann_relu}. Consequently, the current conversion theory discussed in Section~\ref{conversion_theory} does not apply to Transformers, which can lead to conversion errors.

    To be specific, we assume that the outputs of layer $l-1$ in both ANNs and SNNs are identical, denoted as $\bm{a}^{l-1}=\Phi^{l-1}(T)=\frac{\sum_{t=1}^T\bm{x}^{l-1}(t)}{T}$, and we will compare the outputs $\bm{a}^{l}$ and $\Phi^{l}$ in layer $l$. 
    
    Considering an arbitrary non-linear module in layer $l$ of an ANN, its function can be formulated as:
    \begin{align}\label{general_function}
        \bm{a}^l=F(\bm{a}^{l-1}),
    \end{align}
    where $F$ is the function of this layer. 
    Obviously, it cannot be expressed equivalently using Equation \eqref{ann_relu}.
    In this case, if we do not introduce a further conversion method for this non-linear module, the actual output of the SNN counterpart at time $t$ will be $\bm{x}^{l}(t) = F(\bm{x}^{l-1}(t))$. The average output can be formulated as follows:
    \begin{align}\label{actout}
        \Phi^{l}(T)=\frac{\sum_{t=1}^T \bm{x}^l(t)}{T} = \frac{\sum_{t=1}^T F(\bm{x}^{l-1}(t))}{T}.
    \end{align}

    However, in the case of ANNs, the expected average output can be formulated as:
    \begin{align}\label{expout}
        \bm{a}^{l}= F(\bm{a}^{l-1})=F\left(\frac{\sum_{t=1}^T \bm{x}^{l-1}(t)}T\right).
    \end{align}
    Due to the non-linear nature of the module, we have:
    \begin{align}\label{neq}
        \frac{\sum_{t=1}^TF(\bm{x}^{l-1}(t))}T \neq F\left(\frac{\sum_{t=1}^T\bm{x}^{l-1}(t)}T\right).
    \end{align}
    This implies that the output $\Phi^{l}(T)$ of SNNs in Equation \eqref{actout} is not equivalent to the output $\bm{a}^{l}$ of ANNs in Equation \eqref{expout}, posing challenges for non-linear conversion.
    
    \begin{figure}[htb]
        \centering
        \includegraphics[width=\linewidth]{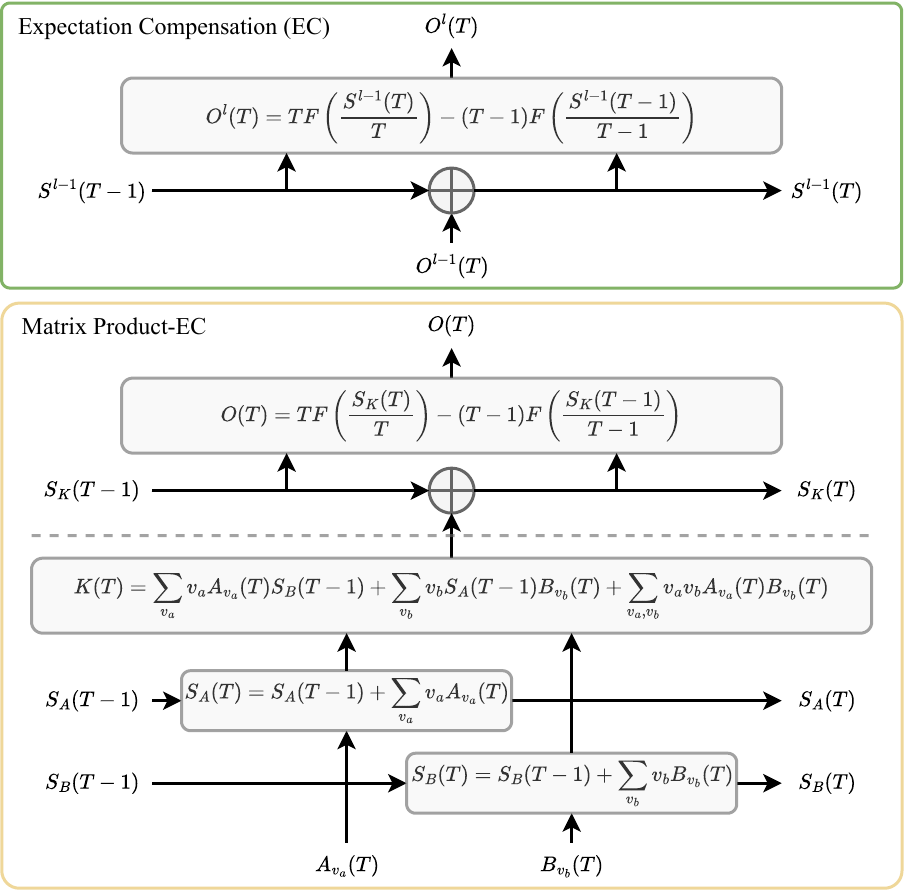}
        \caption{The upper diagram shows the general Expectation Compensation module(EC). The lower diagram shows the Expectation Compensation module for Matrix Product(Matrix Product-EC).}
        
        \label{pic3}
    \end{figure}
   \subsection{Expectation Compensation Module}
    To overcome the challenge of converting non-linear layers, we propose using Expectation Compensation Modules to preserve non-linearity throughout the conversion process by leveraging prior information to compute expectations.
    \subsubsection{General Expectation Compensation Module}\ 
    
    The theorem below calculates the expected output of the arbitrary non-linear layer at each time step in SNNs.
    \begin{theorem}\label{theorem1}
        Consider a non-linear layer $l$ with a function $F$. In SNNs, the output of this layer at time $t$ is denoted as $\bm{O}^{l}(t)$. Let $\bm{S}^l(T)$ be the cumulative sum of layer $l$ outputs up to time $T$, given by $\bm{S}^l(T)=\sum_{t=1}^T\bm{O}^{l}(t)$. The expected output of the SNNs at time $T$ is given by:
        \begin{equation}\label{snn_out}
            \bm{O}^{l}(T)=TF\left(\frac{\bm{S}^{l-1}(T)}{T}\right)-(T-1)F\left(\frac{\bm{S}^{l-1}(T-1)}{T-1}\right).
        \end{equation}
    \end{theorem}
     The detailed proof is provided in the supplementary materials.
     Theorem ~\ref{theorem1} indicates that lossless conversion can be achieved by an accumulator to records $\bm{S}^{l-1}(T)$ and an optional variable to records $TF\left(\bm{S}^{l-1}(T)/T\right)$ as shown in Figure ~\ref{pic3}.

    \subsubsection{Expectation Compensation Module for Matrix Product}\ 
    
    For the matrix product layer, we can convert it into a specialized module that primarily uses additional operations to achieve lossless conversion.
    The theorem below outlines how to calculate the expected output of the matrix product layer at each time step in SNNs.
    \begin{theorem}\label{theorem2}
        Consider a module for matrix product that receives two sets of spike inputs, denoted by $\bm{A}_{v_a}(t)$ and $\bm{B}_{v_b}(t)$. These inputs are generated by neurons $A$ and $B$, respectively, and are characterized by multiple thresholds $v_a$ and $v_b$, as described in Section 4.3. 

        We can integrate the input by $\bm{A}(t)=\sum_{v_a}v_a\bm{A}_{v_a}(t)$ and $\bm{B}(t)=\sum_{v_b}v_b\bm{B}_{v_b}(t)$. Here, $\bm{A}(t)$ and $\bm{B}(t)$ are the sum matrices weighted by multiple thresholds $v_a$ and $v_b$, respectively.

        Let $\bm{S}_{A}(T)=\sum_{t=1}^TA(t)$ and $\bm{S}_{B}(T)=\sum_{t=1}^TB(t)$ represent the cumulative sum of inputs up to time $T$. We define $\bm{S}_{K}(T)=\bm{S}_{A}(T)\bm{S}_{B}(T)$. 
        Then, the expected output at time T can be formulated as:
        \begin{equation}\label{matrix_exp_all}
            \begin{aligned}
                \bm{O}(T)=\frac1T\bm{S}_{K}(T)-\frac1{T-1}\bm{S}_{K}(T-1),
            \end{aligned}
        \end{equation}
        where $\bm{S}_{K}(T)$ can be calculated mainly using addition, as described by the following equation:
        \begin{equation}\label{matrix_exp}
            \bm{S}_{K}(T)=\bm{S}_{K}(T-1)+\bm{K}(T)\\
        \end{equation}
        \begin{equation}
            \begin{aligned}
                \bm{K}(T)=&\sum_{v_a,v_b}v_av_b\bm{A}_{v_a}(T)\bm{B}_{v_b}(T)+\sum_{v_a}v_a\bm{A}_{v_a}(T)\bm{S}_B(T-1)
                \\&+\sum_{v_b}v_b\bm{S}_A(T-1)\bm{B}_{v_b}(T).
            \end{aligned}
        \end{equation}
    \end{theorem}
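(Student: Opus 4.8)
The plan is to treat the matrix product as a bilinear layer and reduce the statement to two routine steps: a telescoping identity that produces \eqref{matrix_exp_all}, and an algebraic expansion that produces the incremental recursion \eqref{matrix_exp} together with the claimed form of $\bm{K}(T)$.

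For \eqref{matrix_exp_all}, I would regard the matrix product as a special case of the general non-linear layer in Theorem~\ref{theorem1}, where the input carried over from layer $l-1$ is the pair $(\bm{A}(t),\bm{B}(t))$ and the layer function is the bilinear map $F(X,Y)=XY$. The derivation of \eqref{snn_out} never uses that the layer has a single input, so its two-argument version applies verbatim and gives
\[
\begin{aligned}
\bm{O}(T)&=TF\!\left(\tfrac{\bm{S}_A(T)}{T},\tfrac{\bm{S}_B(T)}{T}\right)\\
&\quad-(T-1)F\!\left(\tfrac{\bm{S}_A(T-1)}{T-1},\tfrac{\bm{S}_B(T-1)}{T-1}\right),
\end{aligned}
\]
which, after expanding $F$ and substituting $\bm{S}_K=\bm{S}_A\bm{S}_B$, is exactly \eqref{matrix_exp_all}. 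To keep the argument self-contained I would also give the direct check: define $\bm{O}(T)$ by the right-hand side of \eqref{matrix_exp_all} (with the convention that the $(T-1)$-term is dropped at $T=1$), sum over $t=1,\dots,T$, observe that the sum telescopes to $\bm{S}_K(T)/T$, and conclude that the time-average of the SNN outputs equals $\bm{S}_K(T)/T^2=(\bm{S}_A(T)/T)(\bm{S}_B(T)/T)$, i.e.\ the ANN matrix product evaluated at the two averaged inputs; the conversion is therefore lossless in expectation.

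For \eqref{matrix_exp} and the formula for $\bm{K}(T)$, I would expand $\bm{S}_K(T)=\bm{S}_A(T)\bm{S}_B(T)$ using $\bm{S}_A(T)=\bm{S}_A(T-1)+\bm{A}(T)$ and $\bm{S}_B(T)=\bm{S}_B(T-1)+\bm{B}(T)$; the term $\bm{S}_A(T-1)\bm{S}_B(T-1)$ is precisely $\bm{S}_K(T-1)$, which leaves $\bm{K}(T)=\bm{A}(T)\bm{B}(T)+\bm{A}(T)\bm{S}_B(T-1)+\bm{S}_A(T-1)\bm{B}(T)$. Substituting the multi-threshold decompositions $\bm{A}(T)=\sum_{v_a}v_a\bm{A}_{v_a}(T)$ and $\bm{B}(T)=\sum_{v_b}v_b\bm{B}_{v_b}(T)$ and distributing over the finite sums reproduces the three summations in the statement. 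For the ``mainly addition'' claim I would note that every $\bm{A}_{v_a}(T)$ and $\bm{B}_{v_b}(T)$ is a $0/1$ spike matrix, so each matrix product occurring in $\bm{K}(T)$ has at least one spike-matrix factor and hence reduces to selecting and accumulating rows/columns, with the only genuine multiplications being the scalar rescalings by $v_a$, $v_b$, $1/T$ and $1/(T-1)$.

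The computations are pure bookkeeping, so the only real obstacle is conceptual: pinning down the precise sense in which the SNN conversion is ``lossless''. This means fixing the boundary convention at $T=1$, confirming that Theorem~\ref{theorem1} genuinely extends to a bilinear layer fed by two separate spike streams, and checking that the quantity recovered in the $T$-average is the product of the two input time-averages rather than the time-average of the products — the latter being exactly the mismatch flagged in \eqref{neq}. Once these points are settled, both displayed identities follow immediately.
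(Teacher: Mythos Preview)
Your proposal is correct and follows essentially the same route as the paper: the paper derives \eqref{matrix_exp_all} by the same telescoping computation (your invocation of Theorem~\ref{theorem1} with the bilinear $F(X,Y)=XY$ is just a repackaging of that derivation), and it obtains the recursion for $\bm{S}_K(T)$ and the form of $\bm{K}(T)$ by the identical expansion of $(\bm{S}_A(T-1)+\bm{A}(T))(\bm{S}_B(T-1)+\bm{B}(T))$ followed by the multi-threshold substitution. The only place the paper goes further is in the ``mainly addition'' claim, where it performs an explicit operation count (showing $ACs_{\text{SNN}}^{\max}\gg MACs_{\text{SNN}}^{\max}$ in terms of the matrix dimensions and firing rates) rather than the qualitative observation you give.
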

    The detailed proof is provided in the supplementary materials.
    According to Theorem ~\ref{theorem2}, the output $\bm{O}(T)$ can be obtained through the process illustrated in Figure~\ref*{pic3}. The main power consumption in this process occurs during the matrix product calculation of $\bm{K}(T)$ using spike matrices, which can be implemented through accumulations.
    Since each position of the input matrix has only one effective threshold at each time, it limits the total number of input spikes, thereby restricting the total number of operations. Combined with the sparsity of spikes, this reduces power consumption at each time step while achieving lossless conversion.
    \subsection{Multi-Threshold Neuron}

    \begin{figure}[t]
        \centering
        \includegraphics[width=\linewidth]{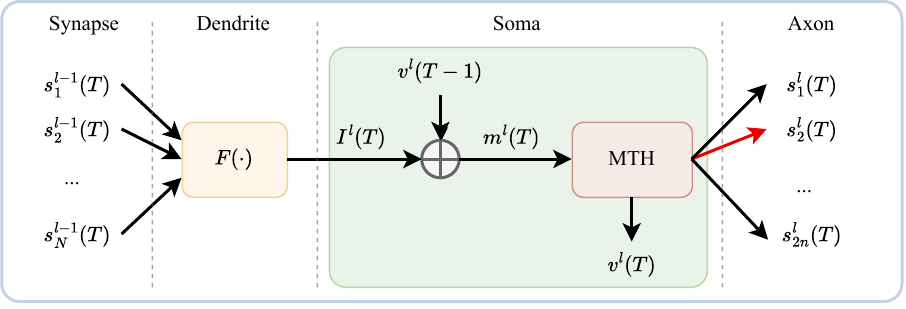}
        \caption{Diagram of MT neuron. MT neuron receives input from nonlinear/linear modules and emits up to one spike.}
        \label{pic5}
    \end{figure}

    \subsubsection{Problem of Consumption and Latency}\ 
    
    If we only use the Expectation Compensation Module, neuron communication will remain in a floating-point format. As discussed in Section \ref{Energy_Estimation}, most of the network's power consumption occurs in the linear and matrix product layers.
    To reduce the network's energy consumption, we introduce spiking neurons before each linear layer and matrix product layer. Thus, we can significantly reduce the network's power consumption by adopting spiking communication.
    
    However, if we only use one threshold, no matter how set, it will result in excessively high firing rates or high inference latency.
    The findings in Section \ref{neurontest} demonstrate the importance of having large and small thresholds in the Transformer. 
    
    \subsubsection{The Proposed Solution: Multi-Threshold Neuron}\label{MTneuron}\ 
    
    To tackle the challenges of high power consumption and latency, we propose a Multi-Threshold Neuron (MT neuron). 
    
    This neuron model has additional thresholds built upon the base threshold, allowing it to process more information in a single time step.
    The MT neuron is characterized by parameters including the positive and negative base thresholds, represented as $\theta_1$ and $-\theta_2$, respectively, and the number of thresholds denoted as $2n$.
    We can refer to $\lambda^l_p$ as the $p$-th threshold value of the MT neuron corresponding to index $p$.
    \begin{align}
        \begin{aligned}
        &\lambda^l_1=\theta^l_1, \lambda^l_2=2\theta^l_1, ..., \lambda^l_n = 2^{n-1}\theta^l_1, \\&\lambda^l_{n+1}=-\theta^l_2, \lambda^l_{n+2}=-2\theta^l_2, ..., \lambda^l_{2n} = -2^{n-1}\theta^l_2, 
        \end{aligned}
    \end{align}
    As shown in Figure ~\ref{pic5}, the dynamic of MT neurons is described by:
    \begin{align}
          &\quad I^l_j(t)= F^l_j(\bm{s}_{,1}^{l-1}(t),...,\bm{s}_{,2n}^{l-1}(t)),
        \\&\quad m^l_j(t)=v^l_j(t-1)+I^l_j(t),
        \\&\quad s^l_{j,p}(t)=MTH_{\theta_1,\theta_2,n}(m^l_j(t))
        \\&\quad x^{l}_j(t)=\sum_{p}s^{l}_{j,p}(t)\lambda^l_p,
        \\&\quad v^l_j(t)=m^l_j(t)-x^l_j(t).
    \end{align}
    The variables $I^l_j(t)$,$s^l_j(t)$,$x^l_j(t)$, $m^l_j(t)$ and $v^l_j(t)$ respectively represent the input, output, postsynaptic potential, and the membrane potential before and after spikes of the $j$-th neuron in the $l$-th layer at time $t$.
    Meanwhile, $F$ is a linear or nonlinear function of this layer. The function $MTH_{\theta_1,\theta_2,n}(x)$ can be described using the following piecewise function:
    \begin{align}
        \begin{aligned}
            &MTH_{\theta_1,\theta_2,n}(x):
            \\&\left\{
            \begin{array}{r@{}l@{}l}
            \lambda^l_n-\frac{\lambda^l_1}{2}\leq&x:&s^l_{j,n}(t)=1,
            \\\lambda^l_{n-1}-\frac{\lambda^l_1}{2}\leq&x<\lambda^l_n-\frac{\lambda^l_1}{2}:&s^l_{j,n-1}(t)=1,
            \\...&&...
            \\\frac{\lambda^l_1}{2}\leq&x<\lambda^l_{2}-\frac{\lambda^l_1}{2}:&s^l_{j,1}(t)=1,
            \\\frac{\lambda^l_{n+1}}{2}\leq&x< \frac{\lambda^l_1}{2}:& all\ the\ s^l_{j,p}(t) = 0,
            \\\lambda^l_{n+2}-\frac{\lambda^l_{n+1}}{2}\leq&x<\frac{\lambda^l_{n+1}}{2}:&s^l_{j,n+1}(t)=1,
            \\...&&...
            \\\lambda^l_{2n}-\frac{\lambda^l_{n+1}}{2}\leq&x<\lambda^l_{2n-1}-\frac{\lambda^l_{n+1}}{2}:&s^l_{j,2n-1}(t)=1,
            \\&x<\lambda^l_{2n}-\frac{\lambda^l_{n+1}}{2}:&s^l_{j,2n}(t)=1.
            \end{array}\right.
        \end{aligned}
    \end{align}
    
    \begin{figure}[t]
        \centering
        \includegraphics[width=0.9\linewidth]{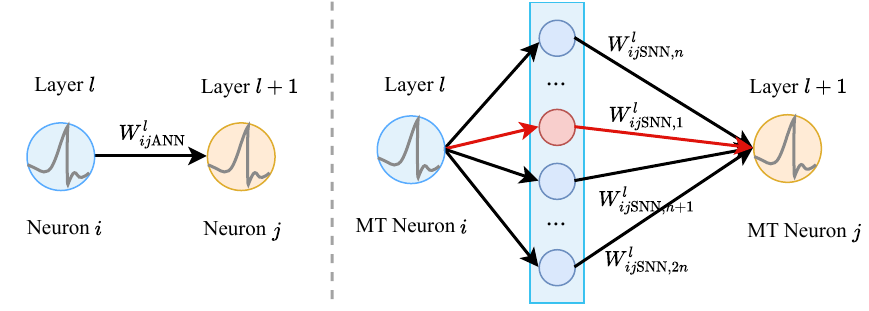}
        \caption{Left: Original connection in ANN. Right: Parallel Parameter normalization of MT neuron in SNN. The MT Neuron extends one connection to $2n$ channels. At each time, only one of the $2n$ channels can emit a spike.}
        \label{pic2}
    \end{figure}
    
    The results of experiments presented in Section \ref*{neurontest} indicate that although this neuron has multiple thresholds, most of the spikes it generated are concentrated in $\theta_1$ and $-\theta_2$. The spikes generated by other thresholds are minimal, which reduces energy consumption and inference latency.

    \subsubsection{Parallel Parameter normalization for MT Neuron}\label{PPNMT}\ 
    
    Spike neurons communicate with each other by producing an output spike of either 0 or 1. As for function $F$ in Figure ~\ref{pic5}.
    
    If $F$ is a Matrix Product-EC function, we only need to send spikes $s^l(t)$ to $F$ as $\bm{A}_{v_a}(t)$ or $\bm{B}_{v_b}(t)$.
    
    If $F$ is a general nonlinear EC function, we will integrate spike output by $I_j^l(t)=F^l_j(\sum_{p}\bm{s}_{,p}^{l-1}(t)\lambda^{l-1}_p)$.
    
    If $F$ is a linear function, $I_j^l(t)$ can be expressed by
    \begin{equation}
        I^l_j(t)=\sum_{i}w^l_{ij\text{ANN}}x^{l-1}_i(t)
        =\sum_{i}w^l_{ij\text{ANN}}\sum_{p}s^{l-1}_{i,p}(t)\lambda^{l-1}_p
    \end{equation}
    A parallel parameter normalization method is proposed to support spike communication between MT neurons in a linear layer.
    This method extends the ANN weight to 2n weights in the SNN corresponding to 2n thresholds of MT neurons, as shown in Figure ~\ref{pic2}. We update these weights using the following formula:
    \begin{align}
        W^l_{\text{SNN},p} = W^l_{\text{ANN}}\frac{\lambda^{l-1}_p}{\lambda^l_1}
    \end{align}
    Here, we divide an extra variable $\lambda^l_1$ to equilibrate parameter size. Let's set $\eta^l = \frac{\theta_2^l}{\theta_1^l}$. This brings the neuron to an equivalent form, which is as follows:
    \begin{align}
        &I^l_j(t)= \sum_{i,p}w^l_{ij\text{SNN,p}}s^{l-1}_{i,p}(t)
        \\&\theta_{1,new}=1,\theta_{2,new}=\eta
    \end{align}
    
    Based on the above discussion, we name this method: Expectation Compensation and Multi-Threshold(ECMT). The overall conversion algorithm can be summarized in Algorithm \ref{alg:algorithm}.
    The conversion is a one-time process, allowing the converted model to be reused without other computations before use.

    \begin{algorithm}[htb]
        \caption{The conversion method using Expectation Compensation Module and Multi-Threshold Neuron(ECMT)}
        \label{alg:algorithm}
        \begin{algorithmic}[1] 
            \item[\textbf{Input}:] Pre-trained Transformer ANN model $f_{\text{ANN}}(\bm{W})$; Dataset $D$; Time-step $T$ to test dataset; Threshold percent $p$.
            \item[\textbf{Output:}] SNN model $f_{\text{SNN}}(\bm{W},\bm{\theta}_1,\bm{\theta}_2,\bm{v})$
            \STATE {\bf step1:} Obtain the base thresholds $\bm{\theta}_1$ and $\bm{\theta}_2$
            \FOR{length of Dataset \bm{$D$}}
            \STATE Sample minibatch data from \bm{$D$}
            \STATE Run the data on $f_{\text{ANN}}$ and static the activation values before linear and matrix product module at $p\%$ and $(1-p\%)$, setting them as $\bm{\theta}_1$ and $-\bm{\theta_2}$ respectively.
            \ENDFOR
            \STATE {\bf step2:} Converted to SNN model
            \FOR{module $m$ in $f_{\text{ANN}}.\text{Module}$}
            \IF {$m$ is Linear Module}
            \STATE Add a Multi-Threshold Neuron before $m$
            \ELSIF {$m$ is Matrix Product}
            \STATE replace $m$ by two Multi-Threshold Neurons followed by a Matrix Product EC Module
            \ELSIF {$m$ is Other Nonlinear Module}
            \STATE replace $m$ by an EC Module
            \ENDIF
            \ENDFOR
            \STATE Set the base thresholds of MT neurons to corresponding $\bm{\theta}_1$,$-\bm{\theta_2}$ and set the initial membrane potential $\bm{v}$ to $0$.
            \STATE $f_{\text{SNN}}=\text{Parallel Parameter normalization(}f_{\text{ANN}}\text{)}$
            \STATE \textbf{return} $f_{\text{SNN}}$   
        \end{algorithmic}
    \end{algorithm}

    \begin{table*}[htb]
        \centering
        \caption{Accuracy and energy consumption ratio of ECMT(Ours) on ImageNet1k dataset}
        \label{accandenergy}
        \begin{tabular}{cccccccccc}
        \toprule
        \multirow{2}{*}{Arch.}&\multirow{2}{*}{Accuracy/Energy}&\multirow{2}{*}{Original (ANN)}&\multicolumn{7}{c}{Ours (SNN)}\\
        \cmidrule(lr){4-10}
        &&&T=1&T=2&T=4&T=6&T=8&T=10&T=12\\
        \midrule
        \multirow{2}{*}{ViT-S/16}&Acc.~(\%)&78.04&0.17&10.66&62.85&73.22&76.03&77.07&77.41\\
        &Energy ratio&1&0.06&0.15&0.37&0.59&0.82&1.03&1.25\\
        \midrule
        \multirow{2}{*}{ViT-B/16}&Acc.~(\%)&80.77&0.24&20.89&69.98&77.81&79.40&80.12&80.38\\
        &Energy ratio&1&0.04&0.12&0.30&0.48&0.66&0.84&1.01\\
        \midrule
        \multirow{2}{*}{ViT-L/16}&Acc.~(\%)&84.88&3.62&75.38&83.20&84.32&84.60&84.68&84.71\\
        &Energy ratio&1&0.04&0.12&0.27&0.43&0.58&0.74&0.89\\
        \midrule
        \multirow{2}{*}{EVA}&Acc.~(\%)&89.62&2.49&84.08&88.60&89.23&89.40&89.45&89.51\\
        &Energy ratio&1&0.06&0.15&0.35&0.55&0.74&0.93&1.13\\
        \bottomrule
        \end{tabular}
    \end{table*}
    \begin{table*}[htb]
        \centering
        \caption{Comparison between the proposed method and previous works on ImageNet1k dataset}
        \label{cmp}
        \begin{tabular}{cccccc}
        \toprule
        Method & Type & Arch. & Param.~(M) & T & Accuracy (\%) \\
        \midrule
        Spikingformer\cite{zhou2023spikingformer}&Direct Training&Spikingformer-4-384-400E&66.34&4&75.85\\
        Spike-driven Transformer\cite{yao2024spike}&Direct Training&Spiking Transformer-8-768*&66.34&4&77.07\\
        Spikeformer\cite{li2022spikeformer}&Direct Training&Spikeformer-7L/3$\times$2$\times$4&38.75&4&78.31\\
        RMP\cite{han2020rmp}&CNN-to-SNN&VGG-16&138&4096&73.09\\
        SNM\cite{wang2022signed}&CNN-to-SNN&VGG-16&138&64&71.50\\
        TS\cite{deng2021optimal}&CNN-to-SNN&VGG-16&138&64&70.97\\
        QFFS\cite{li2022quantization}&CNN-to-SNN&VGG-16&138&4(8)&72.10(74.36)\\
        \cmidrule(l){3-6}
        \multirow{2}{*}{QCFS\cite{bu2022optimal}}&\multirow{2}{*}{CNN-to-SNN}&ResNet-34&21.8&64&72.35\\
        &&VGG-16&138&64&72.85\\
        \cmidrule(l){3-6}
        \multirow{2}{*}{SRP\cite{hao2023reducing}}&\multirow{2}{*}{CNN-to-SNN}&ResNet-34&21.8&4(64)&66.71(68.61)\\
        &&VGG-16&138&4(64)&66.46(69.43)\\
        \cmidrule(l){3-6}
        MST\cite{wang2023masked}&Transformer-to-SNN&Swin-T(BN)&28.5&128(512)&77.88(78.51)\\
        \cmidrule(l){3-6}
        STA\cite{jiang2024spatiotemporal}&Transformer-to-SNN&ViT-B/32&86&32(256)&78.72(82.79)\\
        \cmidrule(l){3-6}
        \multirow{4}{*}{{\bf ECMT(Ours)}}&\multirow{4}{*}{Transformer-to-SNN}&ViT-S/16&22&8(10)&76.03(77.07)\\
        &&ViT-B/16&86&8(10)&79.40(80.12)\\
        &&ViT-L/16&307&4(8)&83.20(84.60)\\
        &&EVA&1074&4(8)&88.60(89.40)\\
        \bottomrule
        \end{tabular}
    \end{table*}
    
\section{Experimental results}
    In this section, we first evaluate the proposed method's performance on the ImageNet dataset. Then, we compare our method with state-of-the-art SNN training and ANN-SNN conversion methods. Additionally, we perform ablation experiments on Multi-Threshold Neurons. Finally, we analyze the power consumption of the SNNs converted by our method.
    
    \subsection{Experimental Setup}
    We convert pre-trained Vision Transformer including the $\text{ViT-S/16}$, $\text{ViT-B/16}$, $\text{ViT-L/16}$ with 224 resolution \cite{vaswani2017attention}, and the $\text{EVA}$ model \cite{fang2023eva} on Imagenet1k dataset \cite{deng2009imagenet}. 
    For all Multi-Threshold Neurons, we set $n$ to 8 for $\text{ViT-S/16}$, $\text{ViT-B/16}$, $\text{ViT-L/16}$ and 6 for $\text{EVA}$. And we set threshold percent $p$ to 99. 
    A more detailed setup can be found in supplementary materials.
    
    \subsection{Experimental results on different model}
    Based on the provided data, Table ~\ref{accandenergy} compares performance metrics for various architectures. The analysis shows that our SNN approach can achieve comparable accuracies to traditional ANNs with few time steps. Notably, there is only a 1\% drop in accuracy observed relative to their ANN counterparts at T=10 for ViT-S/16, T=8 for ViT-B/16, T=6 for ViT-L/16, and as early as T=4 for EVA. This trend highlights the efficiency of our conversion strategy, especially within the larger models.

    Taking a closer look at the EVA model, our method achieves an impressive 88.60\% accuracy at just T=4, with a negligible 1\% accuracy degradation while using only 35\% of the energy required by the equivalent ANN model. These results demonstrate our approach's effectiveness and suggest its potential for significant energy savings without substantially compromising accuracy, particularly in complex and larger-scale model architectures.

    
    \subsection{Comparison with the State-of-the-art}
    Our experiments on the ImageNet1k dataset have pushed the frontiers of neural network efficiency and accuracy. Table~\ref{cmp} provides a compelling narrative of our progress. Our method is unique in that it facilitates the conversion of Transformer models into SNNs, and it stands out for its computational frugality and high accuracy yield. This marks a significant stride over previous state-of-the-art methodologies.
    
    Firstly, our method is designed to be more efficient than direct training approaches. Instead of starting from scratch, we leverage large pre-trained models to economize on computational efforts and achieve higher accuracy levels than traditional methods. 
    This approach demonstrates our ability to capitalize on the intrinsic efficiencies of pre-trained networks and apply them successfully to SNNs.

    Secondly, our technique surpasses the CNN-to-SNN conversion methods in every aspect. Remarkably, even with the ViT-S/16 model at just 8 time steps, we have achieved an accuracy of 76.0\%, which outperforms the highest accuracy metrics achieved in previously published CNN-to-SNN works. 
    This highlights the effectiveness of our conversion protocol and confirms its superiority in translating CNN architectures into their spiking counterparts.

    Finally, compared to the Swin-T(BN) transformer-to-SNN conversion method mentioned in \cite{wang2023masked}, our approach does not require specific transformer structures for SNN training. Instead, it enables the direct conversion of mainstream ViT models. When compared to the transformer-to-SNN conversion method in \cite{jiang2024spatiotemporal}, our method can decrease overall energy consumption while requiring extremely lower latency. 
    Based on the above discussion, our process ensures quick turnaround and achieves accuracy within 10 temporal steps.

    We conducted experiments using four different models, ViT-S/16, ViT-B/16, ViT-L/16, and EVA, and found that the accuracies achieved at time steps 8, 8, 4, and 4, respectively, were as follows: 76.03\%, 79.4\%, 83.2\%, and 88.6\%. The EVA model, in particular, performed exceptionally well at reduced time steps, indicating the robustness of our method and its potential to set new benchmarks in SNN performance.

    \subsection{The Effect of Multi-Threshold Neuron\label{neurontest}}
    To verify the effectiveness of the Multi-Threshold Neuron, we conducted an experiment to explore the model by varying the number of thresholds in the neurons. We denoted the number of thresholds as $2n$ and experimented with $n=4$, $n=6$, and $n=8$. Our results, depicted in Figure \ref{Acc}, illustrate that as the value of $n$ increases, more large thresholds are included. This suggests that having large thresholds is crucial for enhancing performance.
    
    We also increased the base threshold to investigate further while keeping $n=8$. This allowed us to study the effect of smaller thresholds by their omission. The results were precise: models without small thresholds performed worse than those with both large and small thresholds. Our results showed that both large and small thresholds are crucial for the model. This emphasizes the need for a larger $n$ to achieve low-latency and high-accuracy conversion.

    \begin{figure}[t]
        \centering
        \includegraphics[width=0.9\linewidth]{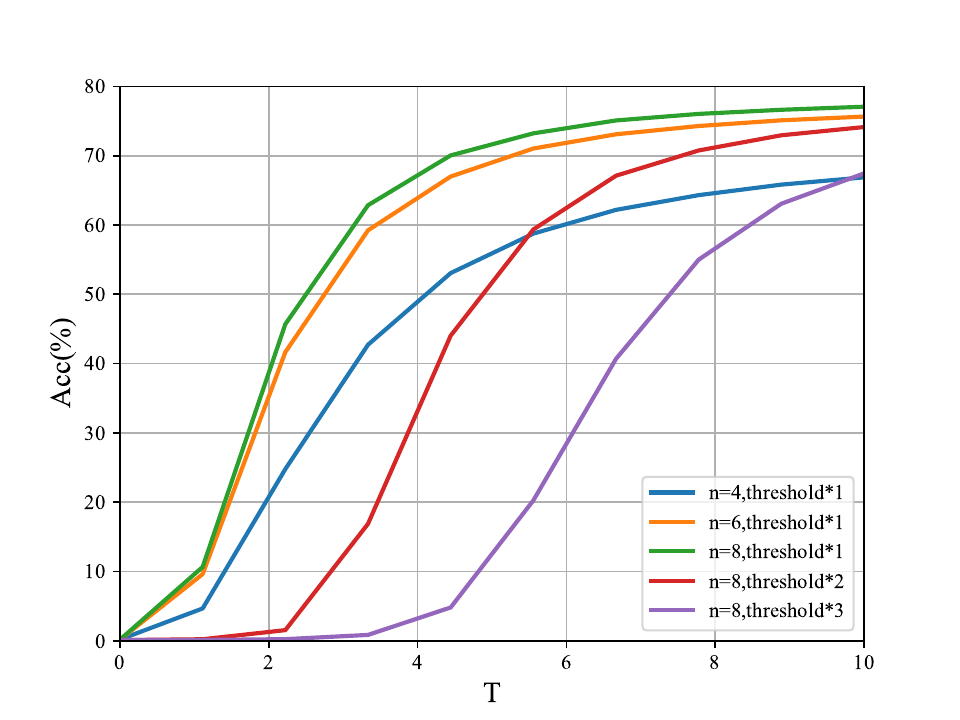}
        \caption{Accuracy under different number and size of thresholds on ViT-S/16, $2n$ denotes the number of thresholds.}
        \label{Acc}
    \end{figure}
    \begin{figure}[t]
        \centering
        \includegraphics[width=\linewidth]{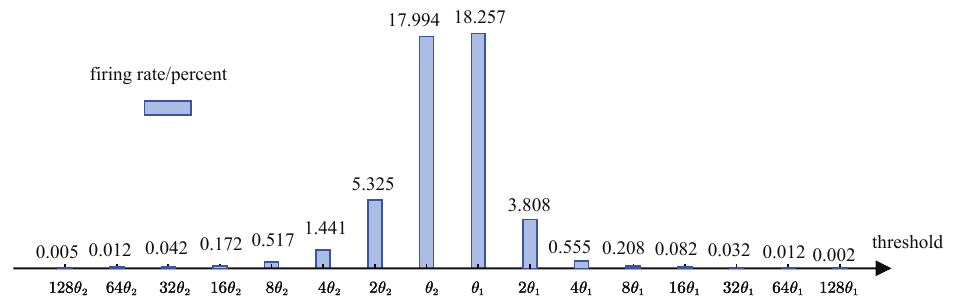}
        \caption{Firing rate at different thresholds}
        \label{firerate}
    \end{figure}
    
    Additionally, we measured the firing rates of spikes associated with each threshold when $n$ was set to 8. The outcomes are presented in Figure \ref{firerate}, which shows that the majority of spikes cluster around the base thresholds, while the spikes generated by other thresholds are minimal. This indicates that adding thresholds consumes less energy but significantly reduces the inference latency.
    
    \subsection{Energy Estimation}\label{Energy_Estimation}
    \begin{table}[t]
        \centering
        \caption{Theoretical calculation dimensions and actual numerical results of different modules, with image patches $N=577$, channels $C=1408$, self-attention heads $N_h=16$, and MLP hidden layer channels $C_h=6144$.}
        \label{energy}
        \resizebox{\linewidth}{!}{
        \begin{tabular}{ccc}
        \toprule
        \multirow{2}{*}{Module}&\multicolumn{2}{c}{Computation}\\
        \cmidrule(l){2-3}
        &Complexity&Results (M)\\
        \midrule
        LayerNorm 1&$N*C$&0.81\\
        Linear $qkv$&$N*C*3C$&3431.65\\
        Matrix Product $q,k$&$Nh*N*(C/Nh)^2$&71.49\\
        Softmax&$Nh*N*N$&5.33\\
        Matrix Product $s,v$&$Nh*N*N*(C/Nh)$&468.76\\
        Linear out&$N*C*C$&1143.88\\
        LayerNorm 2&$N*C$&0.81\\
        MLP Linear 1&$N*C*Ch$&4991.48\\
        GELU&$N*Ch$&3.54\\
        MLP Linear 2 &$N*Ch*C$&4991.48\\
        \bottomrule
        \end{tabular}
        }
    \end{table}
    
    In order to determine the energy consumption of the SNNs, we begin by calculating the theoretical computational complexity for each module presented in the EVA model, as detailed in Table~\ref{energy}. 

    We then employ the formula presented in \cite{rathi2020diet} to estimate the energy consumption of SNNs, as detailed in Equation \eqref{energy1}:

    \begin{equation}\label{energy1}
        \frac{E_{\text{SNN}}}{E_{\text{ANN}}}=\frac{MACs_{\text{SNN}}*E_{\text{MAC}}+ACs_{\text{SNN}}*E_{\text{AC}}}{MACs_{\text{ANN}}*E_{\text{MAC}}}.
    \end{equation}
    Here we set $E_{\text{MAC}}=4.6pJ$ and $E_{\text{AC}}=0.9pJ$ according to \cite{horowitz20141}.
    
    The original network performs most of its computation in linear and matrix product layers. Our method enables us to implement linear transformations of spikes entirely using accumulations and matrix products primarily using accumulations. As a result, we can estimate the number of multiply operations ($MACs_{\text{SNN}}$) to be zero. 
    We evaluated the total energy consumption ratio of our method compared to the original ANNs, and the results are summarized in Table~\ref{accandenergy}. Our method reaches a high accuracy of 88.60\% using only 4 time steps, with a marginal loss of 1\% compared to the original ANNs, while consuming only 35\% of the energy.
    
\section{Conclusion and Discussion}
    In this paper, we propose a novel method for converting pretrained Vision Transformers to SNNs with reduced latency. This approach diverges from previous approaches focusing on converting CNNs to SNNs or directly training SNNs, our method converts pre-trained ViTs to SNNs in a low latency. It replaces various modules with a combination of Expectation Compensation Modules and Multi-Threshold Neurons, achieving significantly higher accuracy on the ImageNet dataset with very low latency compared to previous conversion methods. Moreover, the converted models exhibit substantially less energy consumption than the original ANN ViTs. Our method bridges the performance gap between SNNs and ANNs, paving the way for ultra-high-performance SNNs.
    
    Our research has made significant progress in converting Transformers into SNNs with better performance. 
    However, our current method still requires a small amount of multiplication and cannot use accumulations for implementation alone. 
    Although, this issue can be addressed by utilizing hybrid neural networks such as Zhao et al~\cite{zhao2022framework}, which is based on neuromorphic Tianjic chips~\cite{pei2019towards}.
    Future work may focus on finding alternative solutions for non-linear modules to eliminate the remaining multiplications. 
    This will make them more suitable for conversion and pave the way for further exploration of the conversion from Transformers to SNNs.

\clearpage
\begin{acks}
This work was supported by the National Natural Science Foundation of China (62176003, 62088102) and the Beijing Nova Program (20230484362).
\end{acks}

\bibliographystyle{ACM-Reference-Format}
\balance
\bibliography{my-base}

\clearpage
\section{Proof of Theorem 1}
\begin{theorem}\label{theorem1}
        Consider a non-linear layer $l$ with a function $F$. In SNNs, the output of this layer at time $t$ is denoted as $\bm{O}^{l}(t)$. Let $\bm{S}^l(T)$ be the cumulative sum of layer $l$ outputs up to time $T$, given by $\bm{S}^l(T)=\sum_{t=1}^T\bm{O}^{l}(t)$. The expected output of the SNNs at time $T$ is given by:
        \begin{equation}\label{snn_out}
            \bm{O}^{l}(T)=TF\left(\frac{\bm{S}^{l-1}(T)}{T}\right)-(T-1)F\left(\frac{\bm{S}^{l-1}(T-1)}{T-1}\right).
        \end{equation}
    \end{theorem}
\begin{proof}
    According to Section $3.2$, we denote $\bm{x}^l(t)$ as $\bm{O}^l(t)$, which has the same meaning, and we can approximate the output value of ANNs using the mean value of the output for the first T time steps in SNNs: 
    \begin{align}\label{eq1}
        \bm{a}^l_T=\Phi^l(T)=\frac{\sum_{t=1}^T\bm{O}^l(t)}{T}
    \end{align}
    where $\bm{a}^l_T$ represents the estimated values of neurons in layer $l$ at time $T$ in ANNs. It will change as the corresponding spikes in SNNs accumulate over time.
    
    Meanwhile, in the case of ANNs, $\bm{a}^l_T$ can be formulated as:
    \begin{align}\label{eq2}
        \bm{a}^l_T = F(\bm{a}^{l-1}_T).
    \end{align}
    
    Furthermore, we can deduce the output by subtracting the total output of the previous T and T-1 time steps from the formula \ref{eq1} and the formula \ref{eq2}.
    \begin{equation}
        \begin{aligned}
            &\bm{O}^l(T)=\sum_{t=1}^T\bm{O}^l(t)-\sum_{t=1}^{T-1}\bm{O}^l(t)
            \\&=T\bm{a}^l_T-(T-1)\bm{a}^l_{T-1}
            \\&=TF(\bm{a}^{l-1}_T)-(T-1)F(\bm{a}^{l-1}_{T-1})
            \\&=TF\left(\frac{\sum_{t=1}^TO^{l-1}(t)}{T}\right)-(T-1)F\left(\frac{\sum_{t=1}^{T-1}O^{l-1}(t)}{T-1}\right)
            \\&=TF\left(\frac{\bm{S}^{l-1}(T)}{T}\right)-(T-1)F\left(\frac{\bm{S}^{l-1}(T-1)}{T-1}\right).
        \end{aligned}
    \end{equation}
\end{proof}

\section{Proof of Theorem 2}
\begin{theorem}\label{theorem2}
    Consider a module for matrix product that receives two sets of spike inputs, denoted by $\bm{A}_{v_a}(t)$ and $\bm{B}_{v_b}(t)$. These inputs are generated by neurons $A$ and $B$, respectively, and are characterized by multiple thresholds $v_a$ and $v_b$, as described in Section 4.3. 

    We can integrate the input by $\bm{A}(t)=\sum_{v_a}v_a\bm{A}_{v_a}(t)$ and $\bm{B}(t)=\sum_{v_b}v_b\bm{B}_{v_b}(t)$. Here, $\bm{A}(t)$ and $\bm{B}(t)$ are the sum matrices weighted by multiple thresholds $v_a$ and $v_b$, respectively.

    Let $\bm{S}_{A}(T)=\sum_{t=1}^TA(t)$ and $\bm{S}_{B}(T)=\sum_{t=1}^TB(t)$ represent the cumulative sum of inputs up to time $T$. We define $\bm{S}_{K}(T)=\bm{S}_{A}(T)\bm{S}_{B}(T)$. 
    Then, the expected output at time T can be formulated as:
    
    \begin{equation}\label{matrix_exp_all}
        \begin{aligned}
            \bm{O}(T)=\frac1T\bm{S}_{K}(T)-\frac1{T-1}\bm{S}_{K}(T-1),
        \end{aligned}
    \end{equation}
    
    where $\bm{S}_{K}(T)$ can be calculated mainly using addition, as described by the following equation:
    
    \begin{equation}\label{matrix_exp}
        \bm{S}_{K}(T)=\bm{S}_{K}(T-1)+\bm{K}(T)\\
    \end{equation}
    \begin{equation}
        \begin{aligned}
            K(T)=&\sum_{v_a,v_b}v_av_b\bm{A}_{v_a}(T)\bm{B}_{v_b}(T)+\sum_{v_a}v_a\bm{A}_{v_a}(T)\bm{S}_B(T-1)
            \\&+\sum_{v_b}v_b\bm{S}_A(T-1)\bm{B}_{v_b}(T).
        \end{aligned}
    \end{equation}
    
\end{theorem}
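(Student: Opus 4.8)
The plan is to reduce Theorem~\ref{theorem2} to the single-argument statement of Theorem~\ref{theorem1} by regarding the matrix product as a bilinear map $F(\bm{A},\bm{B})=\bm{A}\bm{B}$ of the \emph{pair} of incoming spike streams, and then to verify the two update identities by direct expansion.

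First I would set up the correspondence with Theorem~\ref{theorem1}. Write $\bar{\bm{A}}_T=\bm{S}_A(T)/T$ and $\bar{\bm{B}}_T=\bm{S}_B(T)/T$ for the running means of the two inputs after $T$ steps. The value the SNN must reproduce at time $T$ is the ANN output evaluated on these means, i.e.\ $\bm{a}_T=F(\bar{\bm{A}}_T,\bar{\bm{B}}_T)=\bar{\bm{A}}_T\bar{\bm{B}}_T$. Following exactly the derivation in the proof of Theorem~\ref{theorem1}, the cumulative output is forced to be $\sum_{t=1}^{T}\bm{O}(t)=T\bm{a}_T=T\,\bar{\bm{A}}_T\bar{\bm{B}}_T=\bm{S}_A(T)\bm{S}_B(T)/T=\bm{S}_K(T)/T$, and subtracting the same expression at time $T-1$ gives
\begin{equation}
\bm{O}(T)=\sum_{t=1}^{T}\bm{O}(t)-\sum_{t=1}^{T-1}\bm{O}(t)=\frac{1}{T}\bm{S}_K(T)-\frac{1}{T-1}\bm{S}_K(T-1).
\end{equation}
The only subtlety here is the normalization: because $F$ is bilinear, dividing each factor by $T$ produces a $1/T^{2}$, and the outer factor $T$ inherited from Theorem~\ref{theorem1} then leaves a single $1/T$ in front of $\bm{S}_K(T)$, matching the claimed formula rather than a $1/T^{2}$.

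Next I would establish the incremental form of $\bm{S}_K$. Using $\bm{S}_A(T)=\bm{S}_A(T-1)+\bm{A}(T)$ and $\bm{S}_B(T)=\bm{S}_B(T-1)+\bm{B}(T)$ and expanding $\bm{S}_K(T)=\bm{S}_A(T)\bm{S}_B(T)$, the term $\bm{S}_A(T-1)\bm{S}_B(T-1)$ is exactly $\bm{S}_K(T-1)$, and the three remaining cross terms sum to $\bm{A}(T)\bm{B}(T)+\bm{A}(T)\bm{S}_B(T-1)+\bm{S}_A(T-1)\bm{B}(T)$, which we name $\bm{K}(T)$. Substituting the threshold decompositions $\bm{A}(T)=\sum_{v_a}v_a\bm{A}_{v_a}(T)$ and $\bm{B}(T)=\sum_{v_b}v_b\bm{B}_{v_b}(T)$ into these three terms and pulling the scalars $v_a,v_b$ outside the matrix products yields precisely the stated expression for $\bm{K}(T)$.

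Finally I would record the computational remark: in every summand of $\bm{K}(T)$ at least one matrix factor ($\bm{A}_{v_a}(T)$ or $\bm{B}_{v_b}(T)$) is a $0/1$ spike matrix, so each of these matrix products is realizable by accumulation alone; only the outer scalars ($v_av_b$, $v_a$, or $v_b$) and the two divisions by $T$ and $T-1$ require multiplications, while spike sparsity together with the single-effective-threshold property bounds the number of accumulations per step. There is no genuine obstacle in this argument; the points that demand care are the bilinear $1/T$ bookkeeping in the first step and checking that the cross-term regrouping reproduces $\bm{K}(T)$ term for term.
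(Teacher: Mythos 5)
Your proposal is correct and follows essentially the same route as the paper: write the expected output as the difference of cumulative outputs evaluated at the running means (so $T\,\bar{\bm{A}}_T\bar{\bm{B}}_T=\bm{S}_K(T)/T$), then expand $\bm{S}_A(T)\bm{S}_B(T)=(\bm{S}_A(T-1)+\bm{A}(T))(\bm{S}_B(T-1)+\bm{B}(T))$ and substitute the threshold decompositions to obtain $\bm{K}(T)$. The paper's only addition is a more quantitative operation count (explicit bounds on accumulations versus multiplications in terms of firing rates) to justify the ``mainly addition'' claim, which your qualitative remark about spike matrices and outer scalars captures in substance.
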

\begin{proof}
    Since we approximate the value of ANNs using the mean value for the first T times in SNNs,
    let the expected input matrices $\bm{A}_T$, $\bm{B}_T$, and $\bm{O}_T=\bm{A}_T\bm{B}_T$ in ANNs be calculated based on the input spikes during the first $T$ time steps in SNNs, denoted as:
    \begin{align}
        &\bm{A}_T=\frac{\sum_{t=1}^T\bm{A}(t)}{T}
        \\&\bm{B}_T=\frac{\sum_{t=1}^T\bm{B}(t)}{T}
        \\&\bm{O}_T=\frac{\sum_{t=1}^T\bm{O}(t)}{T}
    \end{align}
    So, the expected output matrix $\bm{O}(T)$ at time $T$ can be calculated by:
    \begin{equation}\label{matrix_exp2}
        \begin{aligned}
            &\bm{O}(T)=\sum_{t=1}^T\bm{O}(t)-\sum_{i=t}^{T-1}\bm{O}(t)
            \\&=T\bm{O}_T-(T-1)\bm{O}_{T-1}
            \\&=T\bm{A}_T\bm{B}_T-(T-1)\bm{A}_{T-1}\bm{B}_{T-1}
            \\&=T\frac{\sum_{t=1}^T\bm{A}(t)}T\frac{\sum_{t=1}^T\bm{B}(t)}T
            \\&\quad -(T-1)\frac{\sum_{t=1}^{T-1}\bm{A}(t)}{T-1}\frac{\sum_{t=1}^{T-1}\bm{B}(t)}{T-1}
            \\&=\frac1T\sum_{t=1}^T\bm{A}(t)\sum_{t=1}^T\bm{B}(t)-\frac1{(T-1)}\sum_{t=1}^{T-1}\bm{A}(t)\sum_{t=1}^{T-1}\bm{B}(t)
            \\&=\frac1T\bm{S}_A(T)\bm{S}_B(T)-\frac1{T-1}\bm{S}_A(T-1)\bm{S}_B(T-1)
            \\&=\frac1T\bm{S}_K(T)-\frac1{T-1}\bm{S}_K(T-1)
        \end{aligned}
    \end{equation}
    And $\bm{S}_{K}(T)$ can be calculated by:
    \begin{equation}\label{matrix_sum_out}
        \begin{aligned}
        &\bm{S}_{K}(T)=\bm{S}_{A}(T)\bm{S}_{B}(T)
        \\&=(\bm{S}_{A}(T-1)+\bm{A}(T))(\bm{S}_{B}(T-1)+\bm{B}(T))
        \\&=\bm{S}_{A}(T-1)\bm{S}_{B}(T-1)+\bm{A}(T)\bm{B}(T)
        \\&\quad+\bm{A}(T)\bm{S}_{B}(T-1)+\bm{S}_{A}(T-1)\bm{B}(T)
        \\&=\bm{S}_{K}(T-1)+\sum_{v_a,v_b}v_av_b\bm{A}_{v_a}(T)\bm{B}_{v_b}(T)
        \\&\quad+\sum_{v_a}v_a\bm{A}_{v_a}(T)\bm{S}_B(T-1)+\sum_{v_b}v_b\bm{S}_A(T-1)\bm{B}_{v_b}(T)
        \\&=\bm{S}_{K}(T-1)+\bm{K}(T).
        \end{aligned}
    \end{equation}
    Assuming the dimension of $\bm{S}_{K}(T)$, $\bm{S}_{A}(T)$ and  $\bm{S}_{B}(T)$ are $n\times m$, $n\times p$ and $p\times m$, respectively. And suppose the firing rate of $A(T)$ and $B(T)$ are $\eta_1$ and $\eta_2$.
    
    In order to determine the number of different operations required to update $\bm{S}_{K}(T)$, we conduct a brief analysis: 
    Multiplications occur when the threshold is multiplied by the results of various matrix multiplications; 
    Additions occur during the calculation of individual matrix multiplications, as well as the accumulation of the results of the four parts.
    
    As each position of the input matrix has only one effective threshold at each time, it restricts the total number of input spikes, thus limiting the total number of operations.

    The maximum addition operation number is 
    \begin{align}
        ACs_\text{SNN}^{max}=\eta_1\eta_2npm+\eta_1npm+\eta_2npm+3nm
    \end{align}
    where $\eta_1\eta_2npm$, $\eta_1npm$ and $\eta_2npm$ are the maximum addition operations in calculating $\sum_{v_a,v_b}v_av_b\bm{A}_{v_a}(T)\bm{B}_{v_b}(T)$
        , $\sum_{v_a}v_a\bm{A}_{v_a}(T)\bm{S}_B(T-1)$ and $\sum_{v_b}v_b\bm{S}_A(T-1)\bm{B}_{v_b}(T)$, respectively. $3nm$ is the maximum operation in accumulating four parts in Equation \eqref{matrix_exp}.
        
    The maximum multiplication operation number is 
    \begin{align}
        MACs_\text{SNN}^{max}=min(\eta_1,\eta_2)nm+\eta_1nm+\eta_2nm
    \end{align}
    where $min(\eta_1,\eta_2)nm$, $\eta_1nm$ and $\eta_2nm$ are the maximum multiplication operations in calculating $\sum_{v_a,v_b}v_av_b\bm{A}_{v_a}(T)\bm{B}_{v_b}(T)$, $\sum_{v_a}v_a\bm{A}_{v_a}(T)\bm{S}_B(T-1)$ and $\sum_{v_b}v_b\bm{S}_A(T-1)\bm{B}_{v_b}(T)$, respectively.
    
    It can be seen that $ACs_\text{SNN}^{max} \gg MACs_\text{SNN}^{max}$, so $S_K(T)$ can be calculated mainly using addition.
    \end{proof}
\section{Experiment Details}
\subsection{Datasets}
\quad \textbf{CIFAR-10}. The CIFAR-10 dataset \cite{krizhevsky2009learning} consists of 60000 32 × 32 images in
10 classes. There are 50000 training images and 10000 test images.

\textbf{CIFAR-100}. The CIFAR-100 dataset \cite{krizhevsky2009learning} consists of 60000 32 × 32 images in
100 classes. There are 50000 training images and 10000 test images.

\textbf{ImageNet1k}. We use the ILSVRC 2012 dataset \cite{russakovsky2015imagenet}, which consists of 1,281,167
training images and 50000 testing images.
\subsection{Data Preprocessing}
To process our image data, we followed a series of steps. First, we resized the image to the desired size and then cropped it to match the input size. After that, we converted the image into a PyTorch tensor. 
Next, we normalized the pixel values using the provided mean and standard deviation values. The mean and standard deviation values were specified as (0.48145466, 0.4578275, 0.40821073) and (0.26862954, 0.26130258, 0.27577711). Finally, we normalized the pixel values of the three-channel images based on the provided mean and standard deviation.

    \begin{table*}[htb]
        \centering
        \caption{Accuracy and energy consumption ratio of ECMT(Ours) on CIFAR10 dataset}
        \label{accandenergycifar10}
        \begin{tabular}{ccccccccc}
        \toprule
        \multirow{2}{*}{Arch.}&\multirow{2}{*}{Accuracy/Energy}&\multirow{2}{*}{Original (ANN)}&\multicolumn{6}{c}{Ours (SNN)}\\
        \cmidrule(lr){4-9}
        &&&T=1&T=2&T=4&T=6&T=8&T=10\\
        \midrule
        \multirow{2}{*}{ViT-S/16}&Acc.~(\%)&98.33&8.53&31.32&93.82&97.37&98.01&98.21\\
        &Energy ratio&1&0.06&0.15&0.37&0.60&0.82&1.03\\
        \midrule
        \multirow{2}{*}{ViT-B/16}&Acc.~(\%)&98.75&9.17&32.25&95.17&98.24&98.55&98.69\\
        &Energy ratio&1&0.04&0.12&0.30&0.48&0.66&0.83\\
        \midrule
        \multirow{2}{*}{ViT-L/16}&Acc.~(\%)&99.07&10.55&95.14&98.89&99.1&99.03&99.08\\
        &Energy ratio&1&0.03&0.11&0.27&0.42&0.57&0.72\\
        \bottomrule
        \end{tabular}
    \end{table*}
    
    \begin{table*}[htb]
        \centering
        \caption{Accuracy and energy consumption ratio of ECMT(Ours) on CIFAR100 dataset}
        \label{accandenergycifar100}
        \begin{tabular}{ccccccccc}
        \toprule
        \multirow{2}{*}{Arch.}&\multirow{2}{*}{Accuracy/Energy}&\multirow{2}{*}{Original (ANN)}&\multicolumn{6}{c}{Ours (SNN)}\\
        \cmidrule(lr){4-9}
        &&&T=1&T=2&T=4&T=6&T=8&T=10\\
        \midrule
        \multirow{2}{*}{ViT-S/16}&Acc.~(\%)&89.28&0.95&4.9&69.49&84.75&87.83&88.93\\
        &Energy ratio&1&0.06&0.16&0.38&0.61&0.84&1.07\\
        \midrule
        \multirow{2}{*}{ViT-B/16}&Acc.~(\%)&92.26&0.87&17.07&82.86&90.22&91.5&91.91\\
        &Energy ratio&1&0.04&0.12&0.30&0.48&0.66&0.84\\
        \midrule
        \multirow{2}{*}{ViT-L/16}&Acc.~(\%)&93.84&1.61&69.08&91.82&93.04&93.34&93.56\\
        &Energy ratio&1&0.04&0.12&0.27&0.43&0.58&0.73\\
        \bottomrule
        \end{tabular}
    \end{table*}
    
    \begin{table*}[htb]
        \centering
        \caption{Comparison between the proposed method and previous works on CIFAR10 dataset}
        \begin{tabular}{cccccc}
        \toprule
        Method & Type & Arch. & Param.~(M) & T & Accuracy (\%) \\
        \midrule
        Spikingformer\cite{zhou2023spikingformer}&Direct Training&Spikingformer-4-384-400E&9.32&4&95.81\\
        Spike-driven Transformer\cite{yao2024spike}&Direct Training&Spikingformer-4-384-400E&9.32&4&95.6\\
        RMP\cite{han2020rmp}&CNN-to-SNN&VGG-16&138&64(2048)&90.35(93.63)\\
        SNM\cite{wang2022signed}&CNN-to-SNN&VGG-16&138&32(128)&93.43(94.07)\\
        TS\cite{deng2021optimal}&CNN-to-SNN&VGG-16&138&16(32)&92.29(92.29)\\
        QFFS\cite{li2022quantization}&CNN-to-SNN&VGG-16&138&4&92.64\\
        \cmidrule(l){3-6}
        \multirow{2}{*}{QCFS\cite{bu2022optimal}}&\multirow{2}{*}{CNN-to-SNN}&ResNet-18&11.8&8(64)&94.82(96.06)\\
        &&VGG-16&138&8(64)&94.95(95.55)\\
        \cmidrule(l){3-6}
        \multirow{2}{*}{SRP\cite{hao2023reducing}}&\multirow{2}{*}{CNN-to-SNN}&ResNet-18&11.8&4(16)&95.25(95.55)\\
        &&VGG-16&138&4(16)&95.32(95.42)\\
        \cmidrule(l){3-6}
        MST\cite{wang2023masked}&Transformer-to-SNN&Swin-T(BN)&27.6&64(256)&96.32(97.27)\\
        \cmidrule(l){3-6}
        STA\cite{jiang2024spatiotemporal}&Transformer-to-SNN&ViT-B/32&86&32(256)&95.49(95.82)\\
        \cmidrule(l){3-6}
        \multirow{3}{*}{{\bf ECMT(Ours)}}&\multirow{3}{*}{Transformer-to-SNN}&ViT-S/16&22&6(8)&97.37(98.01)\\
        &&ViT-B/16&86&6(8)&98.24(98.55)\\
        &&ViT-L/16&307&6(8)&99.1(99.03)\\
        \bottomrule
        \end{tabular}
        \label{cmpcifar10}
    \end{table*}
    
    \begin{table*}[htb]
        \centering
        \caption{Comparison between the proposed method and previous works on CIFAR100 dataset}
        \begin{tabular}{cccccc}
        \toprule
        Method & Type & Arch. & Param.~(M) & T & Accuracy (\%) \\
        \midrule
        Spikingformer\cite{zhou2023spikingformer}&Direct Training&Spikingformer-4-384-400E&9.32&4&79.21\\
        Spike-driven Transformer\cite{yao2024spike}&Direct Training&Spikingformer-4-384-400E&9.32&4&78.4\\
        RMP\cite{han2020rmp}&CNN-to-SNN&VGG-16&138&128(2048)&63.76(70.93)\\
        SNM\cite{wang2022signed}&CNN-to-SNN&VGG-16&138&32(128)&71.8(73.95)\\
        TS\cite{deng2021optimal}&CNN-to-SNN&VGG-16&138&16(64)&63.73(69.27)\\
        \cmidrule(l){3-6}
        \multirow{2}{*}{QCFS\cite{bu2022optimal}}&\multirow{2}{*}{CNN-to-SNN}&ResNet-18&11.8&8(64)&78.48(79.54)\\
        &&VGG-16&138&8(64)&73.96(77.10)\\
        \cmidrule(l){3-6}
        \multirow{2}{*}{SRP\cite{hao2023reducing}}&\multirow{2}{*}{CNN-to-SNN}&ResNet-20&0.27&4(32)&59.34(65.50)\\
        &&VGG-16&138&4(32)&75.42(76.45)\\
        \cmidrule(l){3-6}
        MST\cite{wang2023masked}&Transformer-to-SNN&Swin-T(BN)&27.6&64(256)&85.4(86.91)\\
        \cmidrule(l){3-6}
        STA\cite{jiang2024spatiotemporal}&Transformer-to-SNN&ViT-B/32&86&32(256)&84.15(85.98)\\
        \cmidrule(l){3-6}
        \multirow{3}{*}{{\bf ECMT(Ours)}}&\multirow{3}{*}{Transformer-to-SNN}&ViT-S/16&22&6(8)&84.75(87.83)\\
        &&ViT-B/16&86&6(8)&90.22(91.5)\\
        &&ViT-L/16&307&6(8)&93.04(93.34)\\
        \bottomrule
        \end{tabular}
        \label{cmpcifar100}
        
    \end{table*}
        
    \subsection{Experimental Setup}
    The conversion in this paper is based on pre-trained Vision Transformer including the $\text{ViT-S/16}$, $\text{ViT-B/16}$, $\text{ViT-L/16}$ with 224 resolution \cite{vaswani2017attention}, and the $\text{EVA}$ model $\text{eva\_g\_patch14}$ in \cite{fang2023eva}. 
    
    For all Multi-Threshold Neurons, we set $n$ to 8 for $\text{ViT-S/16}$, $\text{ViT-B/16}$, $\text{ViT-L/16}$ and 6 for $\text{EVA}$. 
    We set threshold percent $p$ to 99 to get thresholds for each neuron. In particular, due to huge differences in GELU and softmax layers' output values, we configure the positive and negative base thresholds to 0.5 and 0.08, respectively, for neurons following the GELU module in $ ViT$ models, and to 0.0125 for neurons following the softmax module to prevent too few spikes. 
    
    Besides, the precision of the network is highly sensitive to the precision of the classification layer, as mentioned in \cite{li2022quantization}. Since the classification layer has minimal energy consumption during runtime, we retained analog input in the classification layer.

\section{Additional Experimental Details}

    \subsection{Detailed results on other datasets}
    Tables \ref{accandenergycifar10} and \ref{accandenergycifar100} present a comparison of the accuracy and energy consumption of different neural network architectures - ANNs and SNNs - on CIFAR10 and CIFAR100 datasets.
    
    Table \ref{accandenergycifar10} compares the accuracy of ANN and SNN architectures for the CIFAR10 dataset across three model scales: $\text{ViT-S/16}$, $\text{ViT-B/16}$, and $\text{ViT-L/16}$. It can be seen that the SNN model can reach a comparable accuracy while significantly reducing the consumption. For example, when the SNN model is run for 6 time steps, models such as $\text{ViT-S/16}$, $\text{ViT-B/16}$, and $\text{ViT-L/16}$ achieve accuracy levels of 97.37\%, 98.24\%, and 99.1\%, respectively. The remarkable fact is that they only consume 0.6, 0.48, and 0.4 energy, respectively when compared to the original ANN (Artificial Neural Network) models.
    
    Table \ref{accandenergycifar100} presents a similar comparison for the more complex CIFAR100 dataset. For instance, at 6 timesteps, $\text{ViT-S/16}$, $\text{ViT-B/16}$, and $\text{ViT-L/16}$ achieve accuracies of 84.75\%, 90.22\%, and 93.04\%, respectively, while using only 0.61, 0.48, and 0.43 energy compared to original ANN models. It shows the potential of our method to reduce energy consumption while maintaining accuracy. The results demonstrate our method's potential to reduce energy consumption while maintaining accuracy.
    
    \subsection{Comparison with the State-of-the-art on CIFAR10 and CIFAR100 datasets}
    We compare the experimental results using the $\text{ViT-S/16}$, $\text{ViT-B/16}$, $\text{ViT-L/16}$ model on the CIFAR10 and CIFAR100 datasets with previous state-of-the-art methods, as shown in Table \ref{cmpcifar10} and \ref{cmpcifar100}.

    In the evaluation of the CIFAR10 dataset, the ECMT model achieved an impressive accuracy rate of 97.37\%, 98.24\%, and 99.1\% respectively, using the architecture of $\text{ViT-S/16}$, $\text{ViT-B/16}$, $\text{ViT-L/16}$ over just six timesteps. This level of precision is highly competitive, especially compared to similarly-sized models. In evaluating the CIFAR100 dataset, considered more complex, the ECMT method again displays its strength. The results demonstrate that the ECMT method achieves a similar high accuracy.
    
    The ECMT model uses the Transformer-to-SNN approach and has performed exceptionally well on the CIFAR10 and CIFAR100 datasets. Its $\text{ViT-B/16}$ variant stands out by achieving high accuracy with a moderate number of parameters, indicating the potential of SNNs in achieving state-of-the-art results with a significant reduction in computational resources. This balance of efficiency and accuracy makes the ECMT a promising model for energy-efficient and fast processing tasks.
    \balance






\end{document}